\documentclass[letterpaper]{article} 
\usepackage{aaai2027}  
\nocopyright
\usepackage[hyphens]{url}  
\usepackage{graphicx} 
\usepackage{natbib}  
\usepackage{caption} 
\usepackage{amsmath,amssymb,amsthm}
\usepackage{algorithm}
\usepackage{algorithmic}
 \usepackage{multirow}
\usepackage{newfloat}
\usepackage{listings}
\DeclareCaptionStyle{ruled}{labelfont=normalfont,labelsep=colon,strut=off} 
\floatstyle{ruled}
\newfloat{listing}{tb}{lst}{}
\floatname{listing}{Listing}

\usepackage{booktabs}

\newtheorem{lemma}{Lemma}
\newtheorem{proposition}{Proposition}

\title{Lapis: Laplacian Spiking Attention via First-Spike Timing and Membrane Leakage}
\author{
    Written by AAAI Press Staff\textsuperscript{\rm 1}\thanks{With help from the AAAI Publications Committee.}\\
    AAAI Style Contributions by Peter Patel Schneider,
    Sunil Issar,\\
    J. Scott Penberthy,
    George Ferguson,
    Hans Guesgen,
    Francisco Cruz\equalcontrib\corresponding,
    Marc Pujol-Gonzalez\equalcontrib\corresponding
}
\affiliations{
    \textsuperscript{\rm 1}Association for the Advancement of Artificial Intelligence\\

    1101 Pennsylvania Ave, NW Suite 300\\
    Washington, DC 20004 USA\\
    proceedings-questions@aaai.org
}

\title{Lapis: Laplacian Spiking Attention via First-Spike Timing and Membrane Leakage}
\author {
    Kaiwen Tang\textsuperscript{\rm 1},
    Jiaqi Zheng\textsuperscript{\rm 2},
    Zixuan Zhu\textsuperscript{\rm 1, 4, 5},
    Yiqun Wang\textsuperscript{\rm 3},
    Zhanglu Yan\textsuperscript{\rm 1}\corresponding,
    Weng-Fai Wong\textsuperscript{\rm 1}
}
\affiliations {
    \textsuperscript{\rm 1}School of Computing, National University of Singapore\\
    \textsuperscript{\rm 2}Sea AI Lab\\
    \textsuperscript{\rm 3}School of Electrical Engineering, Shanghai Jiaotong University\\
    \textsuperscript{\rm 4}Shanghai Advanced Research Institute, Chinese Academy of Sciences\\
    \textsuperscript{\rm 5}University of Chinese Academy of Sciences, Beijing\\
}

\begin{document}

\maketitle

\begin{abstract}
Self-attention has become central to spiking vision transformers, yet its query-key scoring is still largely inherited from dense networks. Existing spiking variants either simplify dot product scoring or replace it with discrete operators, but spike timing, the native variable of a spiking network, does not directly define how tokens are related.
We propose Lapis, a spiking attention mechanism that scores each token pair by the $\ell_1$ distance between its query and key first-spike latency vectors under time-to-first-spike coding, and maps this distance to an affinity through a Laplacian kernel. The kernel's exponential decay matches the impulse response of a leaky integrate-and-fire membrane, so the accumulated latency difference determines the decay of a membrane trace, while row normalization reduces to a bit shift under power-of-two rounding. Scoring therefore needs only subtraction, absolute value, and accumulation, and removes all multiplication between query and key channels. 
Under a matched backbone and training schedule, Lapis reaches 96.56\% top-1 accuracy on CIFAR-10, within 0.53 points of dot-product scoring. On ImageNet-1K, it reduces the estimated arithmetic energy of the attention path by $14.5\times$ relative to dense dot-product attention. The deployed 6-bit model attains 83.25\% top-1 accuracy at an estimated arithmetic energy of 3.28\,mJ per image.

\end{abstract}


\section{Introduction}
Spiking neural networks (SNNs) offer a brain-inspired and energy-efficient computing paradigm by processing information through sparse, event-driven spikes. ~\cite{hu2024toward}
With their growing success in visual recognition and other tasks, increasingly powerful architectures developed for artificial neural networks (ANNs) have been introduced into the spiking domain. 
In vision, the Vision Transformer (ViT) has made self-attention a central operation for recognition~\cite{touvron2021training, bao2021beit}, since attention relates every pair of tokens and captures the long-range dependencies that local operations miss. 
However, these architectures are primarily designed for dense, continuous-valued computation and therefore do not naturally align with the temporal representation and local dynamics of SNNs.

\begin{figure*}[t]
    \centering
    \includegraphics[width=\linewidth]{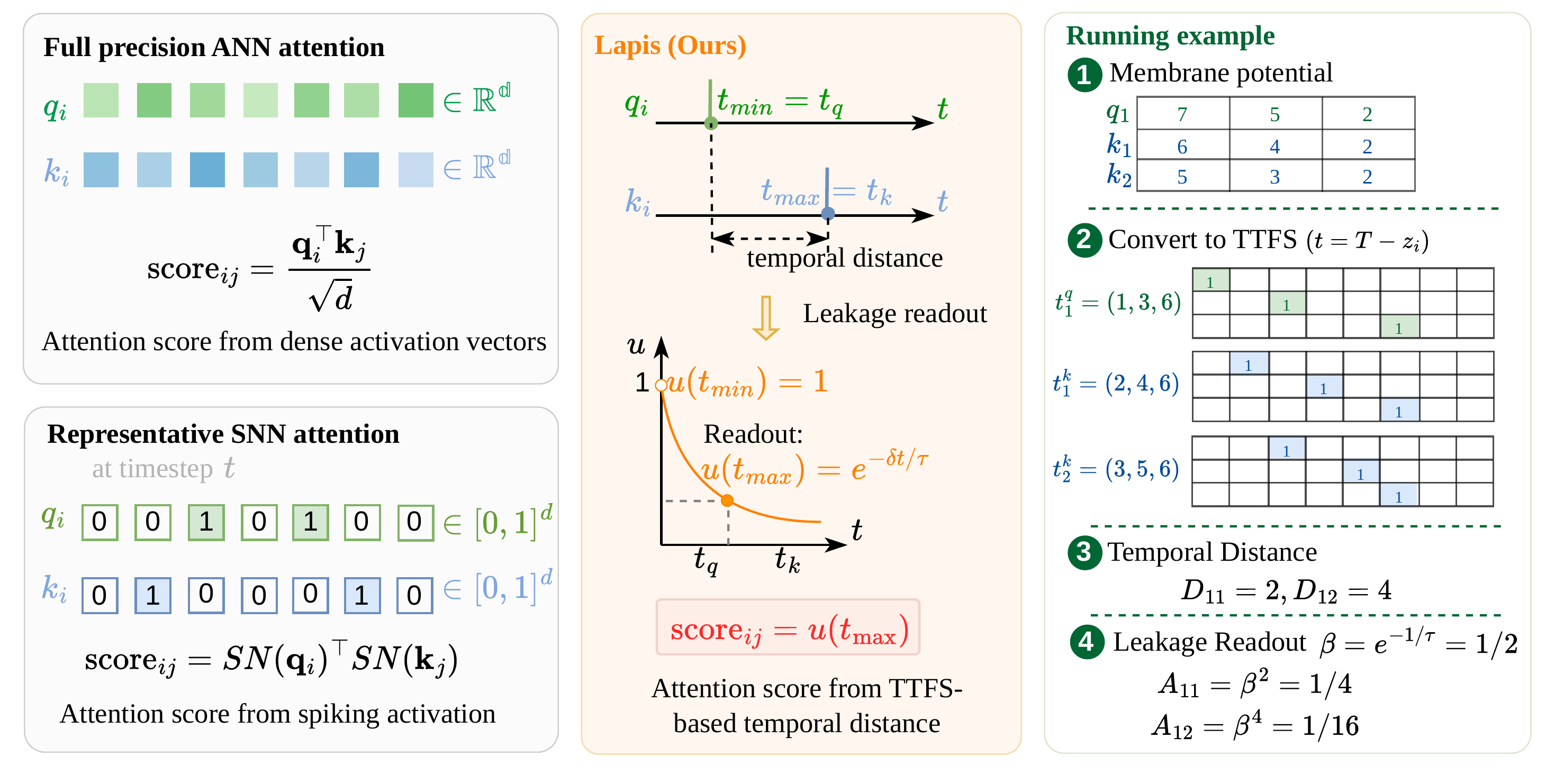}
    \caption{Rethinking query-key scoring with Lapis. ANN attention compares dense activations, while representative spiking attention compares binary states at one timestep. Lapis instead scores from first-spike timing: channel-wise latency differences are accumulated into a single temporal distance, which a leaky trace converts into an affinity by decaying for that duration.}
    \label{fig:lapis-concept}
\end{figure*}

This mismatch has motivated a range of spiking attention mechanisms that reduce the cost of standard self-attention. Some binarize queries, keys, and values to simplify their interactions~\cite{zhou2022spikformer}, while others replace pairwise multiplication with masking, addition, or spike-gated operations~\cite{yao2023spike, zhou2024qkformer}. 
Although effective in reducing arithmetic cost, these methods largely treat spikes as a computational constraint imposed on attention, rather than as a distinct representation of information. 
Their pairwise token similarity functions are therefore derived from simplified dot-product interactions or newly designed discrete operators~\cite{xiao2025rethinking, lee2025spiking}, while the temporal structure carried by spikes remains outside the relation function. This motivates a different design direction, which is defining token relations directly from spike timing. 


TTFS coding makes this opportunity explicit by representing each activation with its first-spike latency~\cite{park2020t2fsnn, goltz2021fast}. Nearby activation values produce nearby firing times, so the latency pattern can serve as a direct relation signal between tokens. Lapis therefore compares query and key tokens by the $\ell_1$ distance between their first-spike latency vectors, rather than by a dot product over encoded activations.
To convert temporal distance into interaction strength, Lapis applies a Laplacian kernel. Its $\ell_1$ distance accumulates channel-wise latency differences, while its exponential decay assigns larger affinities to token pairs with closer first-spike patterns. 
The same exponential form follows the subthreshold decay of a leaky integrate-and-fire (LIF) membrane, linking the attention affinity to membrane leakage dynamics. Thus, spike timing determines the relation, and membrane decay determines how strongly the tokens interact.
In this way, attention is no longer a dot-product operation imposed on spikes, but a temporal operation defined by the same dynamics that represent and process them.

We propose Lapis, a spiking attention mechanism in which token relations are defined by TTFS coding and implemented via membrane leakage dynamics. The module retains the standard query, key, and value structure but replaces dot-product similarity with a temporal-distance formulation. As a result, attention requires no multiplication between query and key channels, and row normalization reduces to a bit shift under power-of-two rounding.
It accumulates firing-time differences between queries and keys into a distance, which a Laplacian kernel then turns into interaction strength, its decay matching neuronal leakage dynamics. 
As a result, attention requires no dot products, and the exponential mapping is induced by the leakage process itself. By removing dense similarity computation, Lapis better matches the sparse nature of spike-based computation. Representation, computation, and neuronal dynamics are unified within a single temporal process.

 Lapis is evaluated as an attention operator rather than as a complete architecture. All models are trained with quantized activations and converted exactly to TTFS spiking networks. ImageNet models start from a pretrained vision transformer, and all ablations vary a single component under a fixed backbone and training schedule. Under this protocol, the Laplacian relation stays within 0.53 points of dot-product scoring on CIFAR-10, and replacing dot-product attention reduces the estimated energy of the attention path from 100.01 to 6.92\,mJ per image on ImageNet-1K. The converted Lapis-L model reaches 83.39\% top-1 at 13.20\,mJ, and 83.25\% at 3.28\,mJ once weights are quantized to 6 bits.

Our contributions are summarized as follows:
\begin{itemize}
\item We define token relations in spiking attention directly from first-spike latency vectors under TTFS coding. The resulting query-key score uses subtraction, absolute value, and accumulation, eliminating multiplication between query and key channels.

\item We propose Lapis, which maps temporal distance to attention affinity through a Laplacian kernel aligned with LIF membrane leakage. This design replaces dot-product score computation and softmax normalization with a spike-native temporal process.

\item We validate Lapis from CIFAR to ImageNet-1K. Lapis reaches 96.56\% top-1 accuracy on CIFAR-10 and 83.25\% on ImageNet-1K, while reducing the estimated arithmetic energy of the attention path by $14.5\times$ relative to dense dot-product attention under the same 45-nm operation-level model.

\end{itemize}

\section{Related Work}
\subsection{Visual Spiking Neural Networks}
Visual SNNs have become a practical route for energy efficient visual recognition. Their progress is commonly organized around ANN-to-SNN conversion and direct training with surrogate gradients~\cite{hu2024toward, tavanaei2019deep}. 
ANN-to-SNN conversion starts from a pretrained ANN and replaces continuous layers with spiking neurons. Combining with weights, thresholds, or membrane states calibration, the converted model approximates the source network over a certain simulation window. Recent conversion methods improve few-step inference by optimizing membrane initialization, shaping activation responses, or explicitly correcting residual conversion errors~\cite{bu2022optimized, bu2023optimal, huang2025converting}. 
Direct training learns SNNs with surrogate gradients directly instead, and its scalability has been improved by stabilizing spike propagation and adapting deep residual architectures to membrane or spike-domain communication~\cite{fang2021incorporating, zheng2021going, deng2022temporal, li2025rethinking}
Despite these advances, most visual SNNs still follow convolutional or residual backbones developed from ANNs, and their design mainly targets trainability and energy. They leave open how relations among visual tokens should be measured when information is represented by spike timing rather than dense activations.

\subsection{Spiking and Laplacian Transformers}
Spiking Transformer research has largely focused on adapting Transformer operations like self-attention to spike-based computation. Spikformer introduced spike-form Q, K, and V without softmax \cite{zhou2022spikformer}, while Spike-driven Transformer replaced their interactions with mask operations and sparse additions \cite{yao2023spike}. QKFormer later introduced linear-complexity Q-K attention \cite{zhou2024qkformer}, while Sorbet extended this line to spiking language models by approximating softmax and normalization with shift-based operations \cite{tang2024sorbet}. Recent variants revise either the interaction domain or the scoring rule. STAtten computes attention over spatiotemporal blocks, whereas $\alpha$-XNOR accounts for both spike and non-spike matches \cite{lee2025spiking, xiao2025rethinking}. Overall, existing methods still derive token relations from adapted attention operators, binary matching, or aggregate timing summaries. Using firing-time patterns across channels as the primary relation signal remains underexplored. Outside the spiking domain, EcoTransformer~\cite{gao2025ecotransformer} replaces dot-product scoring with an $\ell_1$-based Laplacian kernel, while LaplacianFormer~\cite{feng2026laplacianformer} further studies Laplacian kernels for efficient dense attention. These works establish Laplacian affinity as a viable relation function for conventional Transformers. Lapis instead formulates Laplacian attention over TTFS latencies.

\subsection{Time-to-First-Spike Coding}
Time-to-first-spike (TTFS) coding represents information by the latency of the first spike and is often implemented with at most one spike per neuron, making it attractive for sparse and low-latency inference~\cite{goltz2021fast}.
Early deep models focused on making this sparse code trainable. For example, T2FSNN combined kernel-based TTFS neurons with gradient optimization and early firing~\cite{park2020t2fsnn}, DTA-TTFS introduced dynamic firing thresholds and event-driven backpropagation~\cite{wei2023temporal}, and a recent formulation parallelized first-spike and gradient computation while improving output decoding~\cite{che2026parallel}.
TTFS has also been extended to Transformers through conversion and hardware co-design. TTFSFormer designs TTFS neurons for nonlinear Transformer layers~\cite{zhao2025ttfsformer}, whereas Otters realizes the temporal decay term with optoelectronic device dynamics~\cite{yan2025otters}. The remaining gap lies in attention rather than coding: first-spike times are used mainly to represent individual activations or implement layerwise computation, but rarely to compare tokens through firing-time patterns across feature channels.

\begin{figure*}[t]
    \centering
    \includegraphics[
        width=0.82\textwidth,
        keepaspectratio
    ]{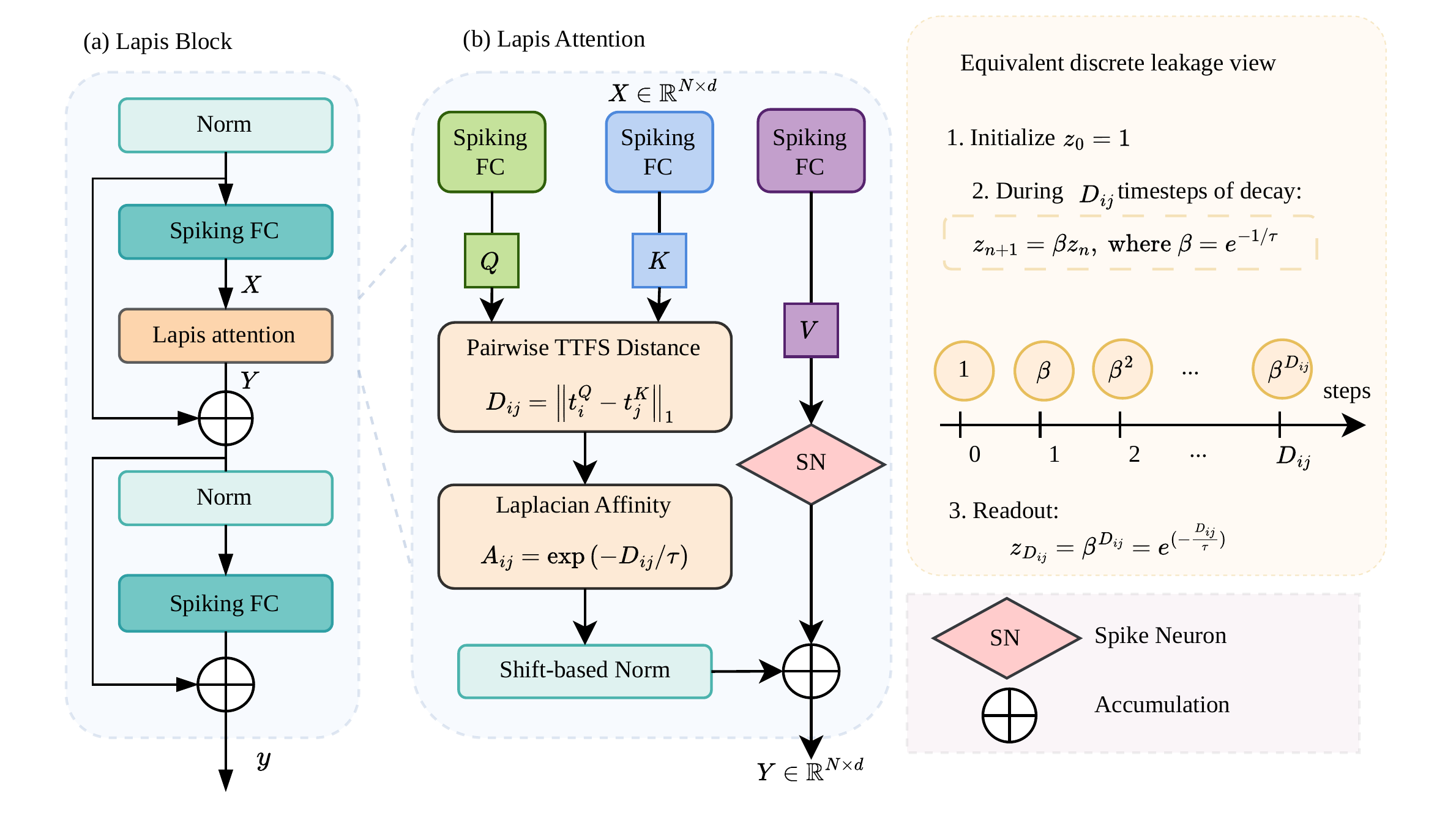}
    \caption{
        \textbf{Overview of the Lapis block and attention module.}
        A Lapis block follows a residual Transformer structure with Lapis attention and a spiking feed-forward network. Within Lapis attention, the query and key branches provide first-spike latencies whose pairwise $\ell_1$ distances are mapped to Laplacian affinities. A power-of-two row denominator enables shift-based normalization. The value branch passes through an independent spiking neuron before weighted accumulation.
        The inset shows the equivalent discrete leakage view: initializing a unit state and applying $D_{ij}$ decay steps yields $\beta^{D_{ij}}$, equal to the corresponding Laplacian affinity under a unit simulation step.
    }
    \label{fig:lapis-overview}
\end{figure*}

\section{Preliminaries}
\subsection{Leaky Integrate-and-Fire Model}
The leaky integrate-and-fire (LIF) model describes a neuron by 
its membrane potential \(u(t)\), which integrates incoming input 
while decaying toward a resting potential \(u_{\mathrm{rest}}\). 
Its subthreshold dynamics are given by
\begin{equation}
\tau_m \frac{\mathrm{d}u(t)}{\mathrm{d}t}
=
-\bigl(u(t)-u_{\mathrm{rest}}\bigr) + I(t),
\end{equation}
where \(I(t)\) denotes the input drive and \(\tau_m>0\) is the 
membrane time constant. When the membrane potential reaches a 
firing threshold, the neuron emits a spike and its potential is 
reset. In the absence of input and threshold crossings, the 
membrane potential follows
\begin{equation}
u(t+\Delta)-u_{\mathrm{rest}}
=
\bigl(u(t)-u_{\mathrm{rest}}\bigr)
\exp\left(-\frac{\Delta}{\tau_m}\right).
\end{equation}
The membrane time constant therefore determines how quickly the 
effect of a past input diminishes with elapsed time
\cite{gerstner2014neuronal}.

\subsection{Time-to-First-Spike Coding}
Time-to-first-spike (TTFS) coding represents a neuronal response 
by the latency of its first spike relative to the beginning of a 
finite coding window. Only the first spike contributes to the 
representation; later spikes do not change the encoded value and 
can therefore be omitted. TTFS can thus represent an active 
response with a single event, whereas rate coding represents 
information through the number of spikes observed over time. A 
neuron that does not emit a spike within the coding window remains 
silent. The mapping between a value and its spike latency is 
determined by the encoding rule rather than by TTFS itself
\cite{park2020t2fsnn,goltz2021fast}.

\section{Method}

\subsection{Overview}
The central idea of Lapis is to form attention weights from spike timing through neuronal leakage. Lapis retains the multi-head query, key, and value projections. For head $h$, let $\mathbf{t}_{i}^{Q,h}$ and $\mathbf{t}_{j}^{K,h}$ denote the channel-wise first-spike latency vectors of query token $i$ and key token $j$, respectively. We define their temporal distance as
\begin{equation}
    D_{ij}^{h}
    =
    \left\|
    \mathbf{t}_{i}^{Q,h}
    -
    \mathbf{t}_{j}^{K,h}
    \right\|_{1},
    \label{eq:overview-distance}
\end{equation}
where a smaller $D_{ij}^{h}$ indicates closer agreement between the two first-spike patterns.

We convert the temporal distance into a positive, unnormalized affinity using a Laplacian kernel:
\begin{equation}
    A_{ij}^{h}
    =
    \exp\left(
    -\frac{D_{ij}^{h}}{\tau_h}
    \right),
    \label{eq:overview-affinity}
\end{equation}
where $\tau_h>0$ is a pre-head learnable temporal scale. A smaller $\tau_h$ causes the affinity to decay more rapidly with temporal distance, whereas a larger $\tau_h$ permits larger differences in first-spike timing. The exponential decay follows the subthreshold dynamics of an LIF membrane, providing a leakage-based realization of the temporal affinity.
For each query, Lapis accumulates the affinities over all keys
and quantizes the resulting normalization factor to a power of
two. Under fixed-point arithmetic, the final scaling can
therefore be implemented by a bit shift.

Overall, Lapis forms attention weights from an $\ell_1$ temporal distance through leakage-based decay and power-of-two normalization. Computing the query--key distance requires only subtraction, absolute value, and accumulation over channels, avoiding the channel-wise multiplications required by dot-product scoring.

\subsection{Temporal Query and Key Relations}
\label{sec:temporal-relations}
We now define the latency vectors used in
Eq.~\eqref{eq:overview-distance}. For each head $h$, let
\begin{equation}
    S^{Q,h}, S^{K,h} \in \{0,1\}^{T \times N \times C}
\end{equation}
be the query and key spike tensors, where $T$ is the simulation
length, $N$ is the number of tokens, and $C$ is the channel dimension
of the head. We take one simulation step as the unit of latency.

For any token spike train $X\in\{0,1\}^{T\times C}$, define the
first-spike map $\phi(X)\in\{0,\ldots,T\}^{C}$ by
\begin{equation}
    [\phi(X)]_c
    =
    \begin{cases}
        \min\{n \mid X_{n,c}=1,\ 0\le n<T\},
        & \text{if such } n \text{ exists},\\
        T,
        & \text{otherwise}.
    \end{cases}
    \label{eq:first-spike-map}
\end{equation}
The terminal value $T$ denotes no spike within the simulation window.
In our TTFS conversion, it is the code for zero activation.

The query and key latency vectors are
\begin{equation}
    \mathbf{t}_{i}^{Q,h}
    =
    \phi(S^{Q,h}_{:,i,:}),
    \qquad
    \mathbf{t}_{j}^{K,h}
    =
    \phi(S^{K,h}_{:,j,:}).
    \label{eq:qk-latency-vectors}
\end{equation}
Lapis defines their temporal relation by the $\ell_1$ distance between
the two latency code vectors as Equation~\ref{eq:overview-distance}.
Thus, a zero-code entry is treated like any other latency code: if the query and key entries are both $T$, their contribution to the distance is zero. The next subsection maps $D_{ij}^{h}$ to a positive affinity through membrane leakage.

\subsection{Laplacian Affinity from Membrane Leakage}
\label{sec:leakage-affinity}

Given the temporal distance $D_{ij}^{h}$, Lapis assigns an affinity to each query--key pair by
\begin{equation}
    A_{ij}^{h}
    =
    \exp\left(
        -\frac{D_{ij}^{h}}{\tau_h}
    \right),
    \label{eq:lapis-affinity}
\end{equation}
where $\tau_h>0$ is the temporal scale of head $h$. A smaller $\tau_h$ gives a sharper decay with temporal distance, while a larger $\tau_h$ allows broader timing variation. Channels enter the affinity only through the additive $\ell_1$ reduction. Lapis first accumulates the scalar $D^h_{ij}$ and then maps it to the single affinity $A^h_{ij}$.

The exponential form in Eq.~\eqref{eq:lapis-affinity} matches the subthreshold decay of an LIF membrane. Let
\begin{equation}
    \beta_h
    =
    \exp\left(
        -\frac{1}{\tau_h}
    \right),
    \qquad
    0<\beta_h<1,
    \label{eq:leak-factor}
\end{equation}
and consider the discrete leaky recurrence
\begin{equation}
    u_{r+1}
    =
    \beta_h u_r,
    \qquad
    u_0=1,
    \label{eq:discrete-leakage}
\end{equation}
with no input and no reset.

\begin{proposition}[Leakage realization of Lapis affinity]
\label{prop:leakage-realization}
For any query-key pair $(i,j)$, the affinity in Eq.~\eqref{eq:lapis-affinity} is obtained by running the leaky recurrence in Eq.~\eqref{eq:discrete-leakage} for $D_{ij}^{h}$ steps:
\begin{equation}
    u_{D_{ij}^{h}}
    =
    A_{ij}^{h}.
    \label{eq:leakage-realization}
\end{equation}
\end{proposition}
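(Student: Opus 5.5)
The plan is to first make sure the number of recurrence steps is a genuine nonnegative integer, and then run a short induction on the recurrence in Eq.~\eqref{eq:discrete-leakage}.

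\textbf{Step 1: $D_{ij}^{h}$ is a valid step count.} By Eq.~\eqref{eq:first-spike-map}, every entry of $\mathbf{t}_{i}^{Q,h}$ and $\mathbf{t}_{j}^{K,h}$ lies in $\{0,\ldots,T\}$. Each channel-wise difference is therefore an integer, and so is its absolute value. Summing over the $C$ channels in Eq.~\eqref{eq:overview-distance} gives
\[
D_{ij}^{h}\in\{0,1,\ldots,CT\}.
\]
The silent code $T$ needs no special case: it is an ordinary integer code, as noted after Eq.~\eqref{eq:qk-latency-vectors}. Hence $u_{D_{ij}^{h}}$ is well defined.

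\textbf{Step 2: closed form of the recurrence.} I will show by induction on $r$ that $u_r=\beta_h^{\,r}$ for every integer $r\ge 0$.
\begin{itemize}
\item The base case is $u_0=1=\beta_h^{0}$.
\item For the inductive step, Eq.~\eqref{eq:discrete-leakage} gives $u_{r+1}=\beta_h u_r=\beta_h^{\,r+1}$.
\end{itemize}
No input or reset terms appear, so nothing else enters the recurrence.

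\textbf{Step 3: match the affinity.} Substituting Eq.~\eqref{eq:leak-factor} into the closed form gives
\[
u_r=\exp(-r/\tau_h)
\]
for every integer $r\ge 0$. Taking $r=D_{ij}^{h}$ yields $\exp(-D_{ij}^{h}/\tau_h)=A_{ij}^{h}$ by Eq.~\eqref{eq:lapis-affinity}, which is Eq.~\eqref{eq:leakage-realization}.

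As a remark, this agrees with the continuous LIF decay in the preliminaries. Take $u_{\mathrm{rest}}=0$, $\tau_m=\tau_h$, and one simulation step as the unit of time; then the continuous decay over $D_{ij}^{h}$ units gives the same value.

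The only real obstacle is Step 1: the recurrence is defined only at integer indices, so integrality of $D_{ij}^{h}$ must be checked. That integrality follows directly from the codomain of the first-spike map $\phi$; the rest is routine.
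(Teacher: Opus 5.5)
Your proposal is correct and follows the paper's proof: you derive the closed form $u_r=\beta_h^{r}$ from the recurrence, evaluate it at $r=D_{ij}^{h}$, and rewrite it via Eq.~\eqref{eq:leak-factor}. The paper leaves the integrality check of your Step 1 implicit here and establishes it in part (i) of Proposition~\ref{prop:affinity-properties}, and your induction in Step 2 spells out what the paper states in one line.
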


\begin{proof}
By Eq.~\eqref{eq:discrete-leakage}, $u_r=\beta_h^r$ for any
nonnegative integer $r$. Taking $r=D_{ij}^{h}$ and substituting
Eq.~\eqref{eq:leak-factor} gives
\begin{equation}
    u_{D_{ij}^{h}}
    =
    \beta_h^{D_{ij}^{h}}
    =
    \exp\left(
        -\frac{D_{ij}^{h}}{\tau_h}
    \right)
    =
    A_{ij}^{h}.
\end{equation}
\end{proof}

The proposition shows that the Lapis affinity can be evaluated
by applying the membrane-leak recurrence for
$D_{ij}^{h}$ updates.

\begin{proposition}[Properties of the temporal affinity]
\label{prop:affinity-properties}
For every head $h$ and all token pairs $(i,j)$:

(i) \emph{Finite affinity set.}
Since every first-spike latency lies in $\{0,\ldots,T\}$, we have $D_{ij}^{h}\in\{0,\ldots,CT\}, \text{and} A_{ij}^{h}\in\{\beta_h^{r}: r=0,\ldots,CT\},$
so $A_{ij}^{h}$ takes at most $CT+1$ distinct values per head.

(ii) \emph{Positivity and order consistency.}
$A_{ij}^{h}\in(0,1]$, with $A_{ij}^{h}=1$ if and only if
$\mathbf{t}_{i}^{Q,h}=\mathbf{t}_{j}^{K,h}$. Moreover, for any keys
$j,\ell$,
\[
A_{ij}^{h}>A_{i\ell}^{h}
\;\Longleftrightarrow\;
D_{ij}^{h}<D_{i\ell}^{h},
\]
i.e., the affinity is strictly decreasing in the temporal distance.

(iii) \emph{Latency stability.}
If the query and key latencies are perturbed by
$\boldsymbol{\delta}_{i}^{Q,h}$ and $\boldsymbol{\delta}_{j}^{K,h}$ with
$\|\boldsymbol{\delta}_{i}^{Q,h}\|_1\le\eta_Q$ and
$\|\boldsymbol{\delta}_{j}^{K,h}\|_1\le\eta_K$, then
\[
\bigl|\widetilde A_{ij}^{h}-A_{ij}^{h}\bigr|
\le
\frac{\eta_Q+\eta_K}{\tau_h}.
\]
The proof is given in supplementary material.
\end{proposition}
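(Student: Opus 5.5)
I will prove the three parts separately. Each one reduces, by Proposition~\ref{prop:leakage-realization}, to an elementary property of the map $r\mapsto\beta_h^{r}=\exp(-r/\tau_h)$ combined with a property of the $\ell_1$ distance.

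For (i), every coordinate of $\mathbf{t}_{i}^{Q,h}$ and $\mathbf{t}_{j}^{K,h}$ lies in $\{0,\ldots,T\}$ by the definition of $\phi$ in Eq.~\eqref{eq:first-spike-map}. Hence each channel contributes $|t^Q_c-t^K_c|\in\{0,\ldots,T\}$. Summing over the $C$ channels gives an integer $D_{ij}^{h}\in\{0,\ldots,CT\}$. Proposition~\ref{prop:leakage-realization} gives $A_{ij}^{h}=\beta_h^{D_{ij}^{h}}$, so $A_{ij}^{h}$ lies in $\{\beta_h^r: r=0,\ldots,CT\}$, which has at most $CT+1$ elements.

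For (ii), I will use that $D_{ij}^{h}\ge 0$ and that $x\mapsto e^{-x/\tau_h}$ is strictly decreasing on $[0,\infty)$ because $\tau_h>0$. This gives $A_{ij}^{h}\in(0,1]$. It also gives $A_{ij}^{h}=1$ if and only if $D_{ij}^{h}=0$, which holds if and only if every channel difference vanishes, i.e.\ the two latency vectors are equal. The order equivalence $A_{ij}^{h}>A_{i\ell}^{h}\Leftrightarrow D_{ij}^{h}<D_{i\ell}^{h}$ is strict monotonicity applied in both directions. Injectivity handles the converse.

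For (iii), let $\widetilde D$ denote the perturbed distance. I will carry out two steps.

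\begin{itemize}
\item \textbf{Bound the change in distance.} The reverse triangle inequality for $\ell_1$ gives
\[
|\widetilde D-D|\le\|(\mathbf{t}^Q+\boldsymbol\delta^Q-\mathbf{t}^K-\boldsymbol\delta^K)-(\mathbf{t}^Q-\mathbf{t}^K)\|_1\le\eta_Q+\eta_K.
\]
\item \textbf{Transfer to the affinity.} The map $f(x)=e^{-x/\tau_h}$ is $1/\tau_h$-Lipschitz on $[0,\infty)$, since $|f'(x)|=e^{-x/\tau_h}/\tau_h\le 1/\tau_h$ there. The mean value theorem then yields $|\widetilde A-A|\le|\widetilde D-D|/\tau_h$.
\end{itemize}

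The main obstacle is only the domain issue in (iii). Perturbed latencies need not be integers and need not lie in $\{0,\ldots,T\}$. However, $\widetilde D$ is still an $\ell_1$ norm and hence nonnegative, so both arguments of $f$ lie in $[0,\infty)$, where the Lipschitz bound holds. I will state explicitly that $\widetilde A$ is defined by Eq.~\eqref{eq:lapis-affinity} applied to the perturbed real-valued latencies, not through the integer recurrence.
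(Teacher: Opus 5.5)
Your proposal is correct and follows the paper's own proof essentially step for step: integrality and the bound $0\le D_{ij}^{h}\le CT$ for (i); nonnegativity, strict monotonicity of $x\mapsto e^{-x/\tau_h}$ and definiteness of the $\ell_1$ norm for (ii); and the reverse triangle inequality followed by the mean value theorem with $|f_h'|\le 1/\tau_h$ on $[0,\infty)$ for (iii). Your explicit remark that $\widetilde D_{ij}^{h}\ge 0$, even for non-integer perturbed latencies, keeps both arguments where the Lipschitz bound holds; the paper's proof uses this only implicitly.
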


\subsection{Lapis Attention}
\label{sec:lapis-attention}

The affinities in Eq.~\eqref{eq:lapis-affinity} determine the relative
importance of the keys, but their row sums can vary across queries.
Using them directly for value aggregation would therefore make the
output magnitude depend on both the relative affinities and their total
mass. Lapis controls this variation by scaling each affinity row before
value aggregation. For query token $i$, let
\begin{equation}
    Z_i^{h}
    =
    \sum_{j=1}^{N} A_{ij}^{h}.
    \label{eq:lapis-partition}
\end{equation}

Exact row normalization requires division by the general positive value
$Z_i^{h}$. We instead approximate it by the nearest power of two:
\begin{equation}
    k_i^{h}
    =
    \operatorname{round}\!\left(\log_2 Z_i^{h}\right),
    \qquad
    \widehat Z_i^{h}
    =
    2^{k_i^{h}}.
    \label{eq:pot-partition}
\end{equation}
The Lapis attention weight is then
\begin{equation}
    \widehat W_{ij}^{h}
    =
    \frac{A_{ij}^{h}}{\widehat Z_i^{h}}
    =
    A_{ij}^{h}2^{-k_i^{h}}.
    \label{eq:lapis-weight}
\end{equation}
Since $k_i^{h}$ is an integer, the scaling in
Eq.~\eqref{eq:lapis-weight} can be implemented by a bit shift under
fixed-point arithmetic.

For comparison, exact row normalization gives
\begin{equation}
    W_{ij}^{*,h}
    =
    \frac{A_{ij}^{h}}{Z_i^{h}}
    =
    \operatorname{softmax}_{j}
    \left(
        -\frac{D_{ij}^{h}}{\tau_h}
    \right).
    \label{eq:exact-temporal-attention}
\end{equation}
The following result bounds the approximation introduced by
power-of-two scaling.

\begin{lemma}[Power-of-two approximation]
\label{lem:pot-approximation}
For every query $i$, key $j$, and head $h$,
\begin{equation}
    \frac{1}{\sqrt{2}}W_{ij}^{*,h}
    \leq
    \widehat W_{ij}^{h}
    \leq
    \sqrt{2}W_{ij}^{*,h}.
    \label{eq:pot-approximation}
\end{equation}
\end{lemma}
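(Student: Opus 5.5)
The plan is to reduce the whole statement to a single bound on the ratio $\widehat Z_i^{h}/Z_i^{h}$. By Eq.~\eqref{eq:lapis-weight} and Eq.~\eqref{eq:exact-temporal-attention}, both weights share the numerator $A_{ij}^{h}$, so
\[
\widehat W_{ij}^{h}=W_{ij}^{*,h}\cdot\frac{Z_i^{h}}{\widehat Z_i^{h}}.
\]
This rewriting is legitimate because $Z_i^{h}>0$, which follows from Proposition~\ref{prop:affinity-properties}(ii): every $A_{ij}^{h}\in(0,1]$ and $N\ge 1$. Since $W_{ij}^{*,h}>0$, Eq.~\eqref{eq:pot-approximation} is then equivalent to
\[
\tfrac{1}{\sqrt2}\le Z_i^{h}/\widehat Z_i^{h}\le\sqrt2 .
\]
This condition depends only on $i$ and $h$, which is why the bound holds uniformly over all keys $j$.

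Next I would pass to logarithms. Set $x=\log_2 Z_i^{h}$, which is well defined because $Z_i^{h}>0$, and recall $k_i^{h}=\operatorname{round}(x)$. Rounding to the nearest integer gives $|x-k_i^{h}|\le\tfrac12$, whatever tie-breaking convention is used at half-integers, since both neighbouring integers are then at distance exactly $\tfrac12$. Because $2^{(\cdot)}$ is monotone, this yields
\[
Z_i^{h}/\widehat Z_i^{h}=2^{x-k_i^{h}}\in[2^{-1/2},2^{1/2}],
\]
which is exactly the required ratio bound. Multiplying by $W_{ij}^{*,h}$ then gives Eq.~\eqref{eq:pot-approximation}.

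I do not expect a real obstacle. The only care needed is to confirm that $\log_2 Z_i^{h}$ exists, via the positivity in Proposition~\ref{prop:affinity-properties}(ii), and that the rounding error is at most $\tfrac12$ in the $\log_2$ domain regardless of tie-breaking. It is worth remarking that the rounding happens in log space, so the multiplicative error is symmetric and the bounds are exactly $\sqrt2$ and $1/\sqrt2$, not $2$ and $\tfrac12$.
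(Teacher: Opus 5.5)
Your proposal is correct and follows the paper's own proof essentially step for step. Like the paper, you write $\widehat W_{ij}^{h}=W_{ij}^{*,h}\,Z_i^{h}/\widehat Z_i^{h}$, use $|k_i^{h}-\log_2 Z_i^{h}|\le 1/2$ to get $2^{-1/2}\le Z_i^{h}/\widehat Z_i^{h}\le 2^{1/2}$, and multiply through by the positive $W_{ij}^{*,h}$; your extra checks (positivity of $Z_i^{h}$ so the logarithm exists, and tie-breaking in the rounding) are details the paper leaves implicit.
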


Since
$\left|k_i^{h}-\log_2 Z_i^{h}\right|\leq 1/2$, we have
\[
    2^{-1/2}
    \leq
    \frac{Z_i^{h}}{\widehat Z_i^{h}}
    \leq
    2^{1/2},
\]

which gives Lemma~\ref{lem:pot-approximation}. A complete derivation is provided in supplementary material. The same scaling factor is applied to every key associated with a query. Consequently, power-of-two scaling preserves the relative affinities and their ordering, while keeping the row mass within $[1/\sqrt{2},\sqrt{2}]$.

The value branch remains a spike train. Let $\mathbf S_{n,j}^{V,h}\in\{0,1\}^{C}$ denote the value spike vector of token $j$ at step $n$. Lapis aggregates the value spikes as
\begin{equation}
    \mathbf Y_{n,i}^{h}
    =
    \sum_{j=1}^{N}
    \widehat W_{ij}^{h}
    \mathbf S_{n,j}^{V,h},
    \qquad
    0\leq n<T.
    \label{eq:lapis-value-aggregation}
\end{equation}
The same token-mixing weights are applied at every simulation step, so
the temporal structure of the value spikes is retained. The outputs of
all heads are then concatenated.

\begin{table*}[t]
\centering
\small
\setlength{\tabcolsep}{6.2pt}
\renewcommand{\arraystretch}{1.05}
\begin{tabular}{@{}lllrcc@{}}
\toprule
Model
& Type
& Architecture
& Energy (mJ) $\downarrow$
& Time step $T$
& Top-1 Acc. (\%) $\uparrow$ \\
\midrule
DeiT~\cite{touvron2021training}
& ANN
& DeiT-B
& 80.50
& --
& 81.80 \\
PVT~\cite{wang2021pyramid}
& ANN
& PVT-Large
& 45.08
& --
& 81.70 \\
BEiT~\cite{bao2021beit}
& ANN
& ViT-B/16
& --
& --
& 83.20 \\
BEiT~\cite{bao2021beit}
& ANN
& ViT-L/16
& --
& --
& 85.20 \\
\midrule
Spikformer~\cite{zhou2022spikformer}
& DT-SNN
& Spikformer-8-768
& 21.48
& 4
& 74.81 \\
CML~\cite{zhou2023enhancing}
& DT-SNN
& Spikingformer-8-768
& 16.30
& 4
& 77.64 \\
\multirow{2}{*}{QKFormer~\cite{zhou2024qkformer}}
& \multirow{2}{*}{DT-SNN}
& \multirow{2}{*}{HST-10-768}
& 8.52
& 1
& 81.69 \\
&
&
& 38.91
& 4
& 84.22 \\
\multirow{2}{*}{Max-Former~\cite{fang2026spiking}}
& \multirow{2}{*}{DT-SNN}
& \multirow{2}{*}{Max-10-768}
& 5.27
& 1
& 78.60 \\
&
&
& 14.87
& 4
& 82.39 \\
MST~\cite{wang2023masked}
& A2S
& Swin-T (BN)
& --
& 512
& 78.51 \\
SpikeZIP-TF~\cite{you2024spikezip}
& A2S
& SViT-L-32Level
& $1{,}270.4^{\dagger}$
& 64
& 83.82 \\
\midrule
\multirow{4}{*}{\textbf{Lapis}}
& \multirow{4}{*}{A2S}
& \textbf{Lapis-B}
& 1.50
& 15
& 80.32 \\
&
& \textbf{Lapis-B (FP)}
& 5.35
& 15
& 80.58 \\
&
& \textbf{Lapis-L}
& 3.28
& 20
& 83.25 \\
&
& \textbf{Lapis-L (FP)}
& 13.20
& 20
& 83.39 \\
\bottomrule

\end{tabular}
\caption{ImageNet-1K classification at $224\times224$.
DT-SNN denotes direct training, while A2S denotes ANN-to-SNN
conversion. Energy is the estimated per-image inference energy under the operation-level models used by the corresponding studies. For Lapis, $T$ is the length of the TTFS coding window, with at most one spike per neuron, rather than the number of recurrent simulation steps. $^{\dagger}$Derived from SpikeZIP-TF's reported 19.85\,W using $64$ steps of 1\,ms.}
\label{tab:imagenet_main}
\end{table*}

\subsection{Quantized Training and TTFS Conversion}
\label{sec:qnn-ttfs-conversion}

Lapis is trained as a quantized neural network and converted to a TTFS SNN for inference. This avoids backpropagation through discrete first-spike events. During training, the quantized query and key representations are processed by the same Lapis operator defined in the preceding subsections.

Let $z\in\{0,\ldots,T\}$ be an integer activation code. We convert it
to a binary spike train $\mathcal{E}(z)\in\{0,1\}^{T}$ as
\begin{equation}
    [\mathcal{E}(z)]_n
    =
    \mathbb{I}\!\left[z>0 \ \land\ n=T-z\right],
    \qquad
    0\leq n<T,
    \label{eq:qnn-ttfs-encoding}
\end{equation}
where $\mathbb{I}[\cdot]$ is the indicator function. Each positive
activation code therefore produces one spike at its assigned time
step, and larger codes produce earlier spikes. The encoding is applied
elementwise to the quantized activations throughout the network.

By construction, the first-spike time of the encoded activation is
$\mathcal{T}(\mathcal{E}(z))=T-z$. The conversion therefore preserves
the query--key distance:
\begin{equation}
\begin{aligned}
    \left\|
        \mathcal{T}\!\left(\mathcal{E}(\mathbf q_i^{h})\right)
        -
        \mathcal{T}\!\left(\mathcal{E}(\mathbf k_j^{h})\right)
    \right\|_1
    &=
    \left\|
        (T-\mathbf q_i^{h})
        -
        (T-\mathbf k_j^{h})
    \right\|_1\\
    &=
    \left\|
        \mathbf q_i^{h}
        -
        \mathbf k_j^{h}
    \right\|_1.
\end{aligned}
\label{eq:qnn-ttfs-consistency}
\end{equation}

With the same temporal scale and power-of-two scaling rule, the QNN and the converted SNN produce identical Lapis affinities and attention weights.

\section{Experiment}
\subsection{Experimental Setup}

\paragraph{Datasets and models.}
We evaluate Lapis on CIFAR-10, CIFAR-100, and ImageNet-1K and
report top-1 classification accuracy. We instantiate Lapis at three scales. Lapis-S is used for the CIFAR experiments and all controlled ablations, while Lapis-B and Lapis-L follow the ViT-B and ViT-L configurations and are evaluated on ImageNet-1K. The TTFS coding windows are set to $T=15$ for Lapis-S and Lapis-B, and $T=20$ for Lapis-L. Under the 6-bit deployment setting, the model sizes of Lapis-S are 18.0\,MB on CIFAR-10 and 18.2\,MB on CIFAR-100, while those of Lapis-B and Lapis-L are 70.7\,MB and 238.1\,MB, respectively. All ablation variants use the same Lapis-S backbone and training configuration, with only the component under study being changed.

\paragraph{Training and conversion.}
CIFAR models are trained for 100 epochs with a batch size of 128 using AdamW, an initial learning rate of $1\times10^{-4}$, a weight decay of 0.05, and a cosine schedule with a 5-epoch linear warmup. Unless otherwise specified, CIFAR training uses both logit- and feature-level
knowledge distillation. For ImageNet-1K, we initialize the backbone from a pretrained ViT, namely BeiT~\cite{bao2021beit} and fine-tune it for 60 epochs with a global batch size of 162 and an initial learning rate of $8.5\times10^{-5}$. Each deployment stage is subsequently recovered by a short quantization-aware fine-tuning process with a batch size of 108 and a learning rate of $2\times10^{-5}$. Following Section~\ref{sec:qnn-ttfs-conversion}, the resulting
integer activations are mapped to TTFS spikes for SNN inference.

\paragraph{Implementation.}
Our models are implemented in PyTorch 2.12 with CUDA 12.8 and trained on NVIDIA RTX PRO 6000 Blackwell Server Edition GPUs with 96\,GB of memory. CIFAR experiments use a single GPU, whereas ImageNet experiments use distributed data parallel training on up to three GPUs. We use bfloat16 automatic mixed precision, while the $\ell_1$ distance, exponential affinity, and normalization operations are evaluated in FP32 for numerical stability. Further training and quantization details are provided in the supplementary material.

\subsection{Main Result}
Table~\ref{tab:imagenet_main} compares Lapis with representative dense, directly trained, and converted models. Lapis-B and Lapis-L achieve 80.58\% and 83.39\% top-1 accuracy using 15-step and 20-step TTFS coding windows, respectively. The FP32 Lapis-L model requires an estimated 13.20\,mJ per image. Quantizing its body weights to 6 bits reduces the energy to 3.28\,mJ while retaining 83.25\% accuracy, corresponding to a $4.03\times$ energy reduction. The FP32 Lapis-L model improves over Max-Former at $T=4$ by 1.00 percentage point while using 11.2\% less energy. It reports 66.1\% lower estimated energy than QKFormer at $T = 4$. The 6-bit model extends this to 3.28 mJ, the lowest estimated inference energy among the evaluated models above 83\% top-1, and $11.9\times$ below QKFormer at $T = 4$.

Among conversion-based methods, it matches SpikeZIP-TF to within 0.57 points while using a TTFS coding window as 20 with single spike representation instead of 64 multi-spike simulation steps. 
Together, these results show that Lapis combines large-scale recognition accuracy with a compact temporal representation and low arithmetic cost.

Table~\ref{tab:cifar_main} reports Lapis-S on CIFAR-10 and CIFAR-100. With full-precision body weights, Lapis-S reaches 96.56\% and 81.41\%, the highest accuracy among the compared models on both benchmarks, exceeding QKFormer by 0.38 and 0.26 points and Spikformer by 1.05 and 3.20 points. Quantizing the body weights to 6 bits costs 0.45 and 0.39 points, and the resulting deployment configuration stays within 0.13 points of QKFormer on both benchmarks. The same relation function and low-precision deployment path therefore transfer across model scales.
\begin{table}[t]
\centering
\small
\setlength{\tabcolsep}{3pt}
\renewcommand{\arraystretch}{1.05}
\begin{tabular}{lccc}
\toprule
Method & CIFAR-10 & CIFAR-100 \\
\midrule
Spikformer~\citeyearpar{zhou2022spikformer}  & 95.51 & 78.21 \\
SD Transformer~\citeyearpar{yao2023spike} & 95.60 & 78.40 \\
QKFormer~\citeyearpar{zhou2024qkformer}  & \underline{96.18} & \underline{81.15} \\
\midrule
\textbf{Lapis-S}  & 96.11 & 81.02 \\
\textbf{Lapis-S(FP)} & \textbf{96.56} & \textbf{81.41} \\
\bottomrule
\end{tabular}
\caption{Classification accuracy on CIFAR-10 and CIFAR-100.Lapis-S denotes the 6-bit deployment model, whereas Lapis-S (FP32) uses full-precision body weights.}
\label{tab:cifar_main}
\end{table}

\subsection{Energy Analysis}




We estimate arithmetic energy using the standard 45\,nm operation-level model. The complete accounting protocol is provided in supplementary material. The Laplacian affinity is realized by membrane leakage over a duration set by $D^{h}_{ij}$. For energy accounting, we conservatively cost each affinity evaluation as one lookup over the $CT+1$ attainable values per head, which upper-bounds a dedicated leakage circuit. Restricting the estimate to the attention path, dense dot-product attention requires 100.01\,mJ per image under the same operation-level model, whereas the Lapis attention path requires 6.92\,mJ, a 14.5$\times$ reduction, and 1.96\,mJ once body weights are quantized to 6 bits. 
Relative to the architecture-matched dense execution, Lapis-L with W32 body weights reduces the estimated energy from 283.15\,mJ to 13.20\,mJ per image, corresponding to a $21.4\times$ reduction before weight quantization. Using 6-bit body weights further reduces the energy to 3.28\,mJ, yielding a $86.3\times$ reduction over dense execution while retaining 83.25\% top-1 accuracy. The two operating points separate the sources of efficiency: single-spike TTFS execution establishes the primary gain, and low-precision weights provide an additional $4.03\times$ reduction.


\subsection{Ablation Study}
\paragraph{Relation function.}
Table~\ref{tab:kernel_ablation} isolates the relation function under an otherwise identical training configuration. The Laplacian relation reaches 96.56\%, remaining within 0.53 points of softmax while outperforming the Gaussian kernel by 5.12 points. The Hamming relation, which retains only binary match between first-spike latencies, reaches 87.92\%, confirming that graded temporal distance is necessary for competitive accuracy. It therefore provides the desired operating point for Lapis, which is recognition accuracy close to dot-product attention, together with a query-key score that uses subtraction, absolute value, and accumulation rather than multiplication. Its $L_1$ form also directly matches the TTFS distance and leakage-based decay.
\begin{table}[t]
\centering
\caption{Kernel ablation on CIFAR-10. All variants use the same Lapis-S backbone and training schedule. Hamming measures the number of channels with unequal first-spike latencies, it retains only binary match information and discards graded temporal proximity. ``Mult.free'' indicates computed without multiplication between query and key vectors.}
\label{tab:kernel_ablation}
\begin{tabular}{lccc}
\toprule
Relation & Distance / score & Mult.free & Top-1 (\%) \\
\midrule
Softmax   & Dot product       & - & 97.09 \\
Gaussian  & Squared $\ell_2$  & - & 91.44 \\
Hamming   & Binary match   & \checkmark & 87.92 \\
Laplacian & $\ell_1$          & \checkmark & 96.56 \\
\bottomrule
\end{tabular}
\end{table}

\paragraph{Hardware-oriented components.}
Table~\ref{tab:deployment_ablation} separates the effects of power-of-two normalization and low-precision weights. Replacing exact row normalization with power-of-two scaling changes top-1 accuracy by 0.31 points, showing that general division can be replaced by shift-based scaling while preserving the learned token relations. Moreover, 6-bit body weights lower the estimate to 3.28\,mJ while retaining 83.25\% accuracy.

\begin{table}[t]
\centering
\small
\setlength{\tabcolsep}{3.8pt}
\renewcommand{\arraystretch}{1.05}
\begin{tabular}{lccrr}
\toprule
Normalization
& Weights
& Top-1
& Energy \\
\midrule
Exact
& W32
& 83.70
& 13.27 \\

Power-of-two
& W32
& 83.39
& 13.20 \\

Power-of-two
& W6
& 83.25
& 3.28 \\
\bottomrule
\end{tabular}
\caption{Ablation of the hardware-oriented components on ImageNet-1K.
All rows use the same Lapis-L backbone. Energy is reported in mJ per
image.}
\label{tab:deployment_ablation}
\end{table}

\section{Discussion}

The experiments support a broader view of spiking attention: spike
timing can define the relation space itself, rather than merely encode
activations for an inherited attention operator. Under a matched
training configuration, the Laplacian relation stays within 0.53 points of dot-product scoring and clearly outperforms the Gaussian alternative, while its $\ell_1$ form aligns directly with TTFS distance and leakage-consistent decay. 
The W32 and W6 operating points also clarify the sources of efficiency. Single spike execution and the Lapis attention path already reduce estimated arithmetic energy by $21.4\times$ before weight quantization, while 6-bit weights provide a
further $4.03\times$ reduction. Together with the exact preservation of query-key distances under TTFS conversion, these results make timing-based relation modeling compatible with existing vision Transformer backbones and efficient deployment.

\section{Conclusion}

We presented Lapis, a Laplacian spiking attention mechanism that forms query-key relations from first-spike latency vectors.
Lapis maps $\ell_1$ temporal distance to affinity through
leakage-consistent decay and replaces general row normalization with
power-of-two scaling. On ImageNet-1K, Lapis-L achieves 83.39\%
top-1 accuracy at an estimated 13.20\,mJ with W32 body weights and
83.25\% at 3.28\,mJ with W6 body weights. The latter corresponds to a $86.3\times$ reduction in estimated
arithmetic energy relative to architecture-matched dense execution.
Results on CIFAR-10 and CIFAR-100 further confirm the effectiveness of
the same low-precision deployment across model scales. These results
establish first-spike timing as an effective and efficient basis for
token relation modeling in spiking vision networks.

\bibliography{main}

\clearpage
\appendix
\section{Arithmetic Energy Estimation}
\label{app:energy}

\subsection{Evaluation Protocol}
We estimate the per-image arithmetic energy under the 45\,nm operation model commonly adopted in prior work on spiking Transformers. We use $4.6$\,pJ for an FP32 multiply and accumulate operation and $0.9$\,pJ for an FP32 accumulation. In Lapis-W6, the linear layers use 6-bit weights, for which we use $0.1$\,pJ per integer accumulation. The temporal relation and value aggregation do not use quantized body weights and remain charged at $0.9$\,pJ per accumulation.

\subsection{Window-Level TTFS Activity}
We measure spike activity directly from each trained checkpoint. Forward hooks are attached to all activation quantizers, and the activity of operator $\ell$ is computed as
\begin{equation}
\rho_\ell
=
\frac{
\sum_m \mathbb{I}\!\left[x_{\ell,m}>0\right]
}{
\sum_m 1
},
\label{eq:window_activity}
\end{equation}
where the sum is taken over all evaluated samples and tensor elements. Each positive integer activation produces exactly one spike during the complete TTFS window. Therefore, $\rho_\ell$ is the fraction of neurons that emit their single spike, rather than a per-timestep firing rate.

For a spike-driven operator transferred from dense operation count $N_{\mathrm{dense}}^{(\ell)}$, the corresponding synaptic
operation count is
\begin{equation}
N_{\mathrm{AC}}^{(\ell)}
=
\rho_\ell N_{\mathrm{dense}}^{(\ell)}.
\label{eq:sop_count}
\end{equation}
No additional factor of $T$ is introduced. This is a direct
consequence of the single-spike TTFS representation. The equivalent average activity per timestep is $\rho_\ell/T$, but it is not used to scale the operation count.

\subsection{Operation Counts}

Consider a model with $L$ Transformer blocks, $N$ tokens,
embedding dimension $D$, $H$ attention heads, and MLP
expansion ratio $R$. The architecture-matched dense
execution uses
\begin{align}
N_{\mathrm{QKV}}      &= 3LND^2, \\
N_{\mathrm{rel}}      &= LN^2D, \\
N_{\mathrm{value}}    &= LN^2D, \\
N_{\mathrm{proj}}     &= LND^2, \\
N_{\mathrm{MLP1}}     &= RLND^2, \\
N_{\mathrm{MLP2}}     &= RLND^2.
\label{eq:dense_counts}
\end{align}
All of these operations are counted as dense
multiply--accumulates for the ANN reference.

For Lapis, the corresponding counts are
\begin{align}
N_{\mathrm{QKV}}^{\mathrm{Lapis}}
    &= \rho_{\mathrm{attn}}\,3LND^2, \\
N_{\mathrm{rel}}^{\mathrm{Lapis}}
    &= LN^2D, \\
N_{\mathrm{value}}^{\mathrm{Lapis}}
    &= \rho_{\mathrm{qkv}}\,LN^2D, \\
N_{\mathrm{proj}}^{\mathrm{Lapis}}
    &= \rho_{\mathrm{out}}\,LND^2, \\
N_{\mathrm{MLP1}}^{\mathrm{Lapis}}
    &= \rho_{\mathrm{mlp}}\,RLND^2, \\
N_{\mathrm{MLP2}}^{\mathrm{Lapis}}
    &= \rho_{\mathrm{hid}}\,RLND^2.
\label{eq:lapis_counts}
\end{align}
Here, $\rho_{\mathrm{attn}}$ is the activity at the attention
input, $\rho_{\mathrm{qkv}}$ is the mean activity measured at
the quantized query, key, and value outputs,
$\rho_{\mathrm{out}}$ is the activity of the spike-coded
attention readout, and $\rho_{\mathrm{mlp}}$ and
$\rho_{\mathrm{hid}}$ are the activities at the two MLP
stages.

The temporal relation is deliberately counted densely over
all token pairs. Each of its $LN^2D$ operations is charged as
an absolute-difference accumulation. We do not apply an
additional sparsity factor to this term. This gives a
conservative estimate and attributes the reduction entirely
to replacing channel-wise multiplication with subtraction,
absolute value, and accumulation.

\subsection{Affinity Lookup and Row Scaling}

After the temporal distance is accumulated, each affinity is
obtained from the finite set of values associated with its
attention head. We conservatively charge one table lookup
for every query--key pair,
\begin{equation}
N_{\mathrm{LUT}} = LHN^2.
\label{eq:lut_count}
\end{equation}
We use $10$\,pJ per lookup, which adds approximately
$0.15$\,mJ to Lapis-L. This treatment is conservative with
respect to a dedicated leakage circuit.

For the dense reference, we do not add a separate cost for
the digital exponential. Consequently, the reported
advantage does not rely on assuming an expensive softmax
implementation.

Exact row normalization and power-of-two normalization use
the same affinity row sums. They differ only in the final
scaling operation. Exact normalization requires
\begin{equation}
N_{\mathrm{div}} = LHN^2
\label{eq:division_count}
\end{equation}
general scaling operations. Charging each such operation at
$4.6$\,pJ adds approximately $0.07$\,mJ to Lapis-L.
Power-of-two normalization replaces these operations with
bit shifts, whose arithmetic cost is treated as negligible in
the reported estimate.

\section{Proofs}
\label{app:proofs}

\subsection{Proof of Proposition~\ref{prop:affinity-properties}}
\begin{proof}
We prove the three properties separately.

\paragraph{(i) Finite affinity set.}
For every channel $c$, both
$t_{i,c}^{Q,h}$ and $t_{j,c}^{K,h}$ belong to
$\{0,\ldots,T\}$. Hence,
\[
\left|t_{i,c}^{Q,h}-t_{j,c}^{K,h}\right|
\in \{0,\ldots,T\}.
\]
Since
\[
D_{ij}^{h}
=
\sum_{c=1}^{C}
\left|t_{i,c}^{Q,h}-t_{j,c}^{K,h}\right|,
\]
the distance $D_{ij}^{h}$ is an integer satisfying
\[
0\le D_{ij}^{h}\le CT.
\]
It therefore belongs to $\{0,\ldots,CT\}$. Using
$\beta_h=\exp(-1/\tau_h)$, the corresponding affinity is
\[
A_{ij}^{h}
=
\exp\left(-\frac{D_{ij}^{h}}{\tau_h}\right)
=
\beta_h^{D_{ij}^{h}}.
\]
Consequently,
\[
A_{ij}^{h}
\in
\left\{\beta_h^r:r=0,\ldots,CT\right\},
\]
which contains at most $CT+1$ values.

\paragraph{(ii) Positivity and order consistency.}
Because $\tau_h>0$ and $D_{ij}^{h}\ge 0$,
\[
0<
\exp\left(-\frac{D_{ij}^{h}}{\tau_h}\right)
\le 1.
\]
Thus, $A_{ij}^{h}\in(0,1]$. Moreover,
\[
A_{ij}^{h}=1
\quad\Longleftrightarrow\quad
D_{ij}^{h}=0
\quad\Longleftrightarrow\quad
\mathbf{t}_{i}^{Q,h}=\mathbf{t}_{j}^{K,h},
\]
where the final equivalence follows from the definiteness of
the $\ell_1$ norm.

The function
\[
f_h(x)=\exp\left(-\frac{x}{\tau_h}\right)
\]
is strictly decreasing on $[0,\infty)$. Therefore, for any
two keys $j$ and $\ell$,
\[
A_{ij}^{h}>A_{i\ell}^{h}
\quad\Longleftrightarrow\quad
D_{ij}^{h}<D_{i\ell}^{h}.
\]

\paragraph{(iii) Latency stability.}
Let
\[
\widetilde{\mathbf{t}}_{i}^{Q,h}
=
\mathbf{t}_{i}^{Q,h}
+
\boldsymbol{\delta}_{i}^{Q,h},
\qquad
\widetilde{\mathbf{t}}_{j}^{K,h}
=
\mathbf{t}_{j}^{K,h}
+
\boldsymbol{\delta}_{j}^{K,h},
\]
and define
\[
\widetilde D_{ij}^{h}
=
\left\|
\widetilde{\mathbf{t}}_{i}^{Q,h}
-
\widetilde{\mathbf{t}}_{j}^{K,h}
\right\|_1.
\]
By the reverse triangle inequality,
\begin{align}
\left|
\widetilde D_{ij}^{h}-D_{ij}^{h}
\right|
&\le
\left\|
\left(
\widetilde{\mathbf{t}}_{i}^{Q,h}
-
\widetilde{\mathbf{t}}_{j}^{K,h}
\right)
-
\left(
\mathbf{t}_{i}^{Q,h}
-
\mathbf{t}_{j}^{K,h}
\right)
\right\|_1
\\
&=
\left\|
\boldsymbol{\delta}_{i}^{Q,h}
-
\boldsymbol{\delta}_{j}^{K,h}
\right\|_1
\\
&\le
\left\|
\boldsymbol{\delta}_{i}^{Q,h}
\right\|_1
+
\left\|
\boldsymbol{\delta}_{j}^{K,h}
\right\|_1
\\
&\le
\eta_Q+\eta_K.
\end{align}

For $f_h(x)=\exp(-x/\tau_h)$,
\[
\left|f_h'(x)\right|
=
\frac{1}{\tau_h}
\exp\left(-\frac{x}{\tau_h}\right)
\le
\frac{1}{\tau_h},
\qquad x\ge 0.
\]
The mean value theorem therefore gives
\begin{align}
\left|
\widetilde A_{ij}^{h}-A_{ij}^{h}
\right|
&=
\left|
f_h\!\left(\widetilde D_{ij}^{h}\right)
-
f_h\!\left(D_{ij}^{h}\right)
\right|
\\
&\le
\frac{1}{\tau_h}
\left|
\widetilde D_{ij}^{h}-D_{ij}^{h}
\right|
\\
&\le
\frac{\eta_Q+\eta_K}{\tau_h}.
\end{align}
This completes the proof.
\end{proof}

\subsection{Proof of Lemma~\ref{lem:pot-approximation}}

\begin{proof}
Recall that
\[
k_i^h
=
\operatorname{round}\!\left(\log_2 Z_i^h\right),
\qquad
\widehat Z_i^h
=
2^{k_i^h}.
\]
Since $k_i^h$ is the nearest integer to $\log_2 Z_i^h$,
\[
\left|
k_i^h-\log_2 Z_i^h
\right|
\le
\frac{1}{2}.
\]
Equivalently,
\[
-\frac{1}{2}
\le
\log_2 Z_i^h-k_i^h
\le
\frac{1}{2}.
\]
Exponentiating with base two gives
\[
2^{-1/2}
\le
2^{\log_2 Z_i^h-k_i^h}
\le
2^{1/2}.
\]
Using $\widehat Z_i^h=2^{k_i^h}$, this becomes
\[
\frac{1}{\sqrt{2}}
\le
\frac{Z_i^h}{\widehat Z_i^h}
\le
\sqrt{2}.
\]

The exact and power-of-two normalized weights satisfy
\[
W_{ij}^{\star,h}
=
\frac{A_{ij}^{h}}{Z_i^h},
\qquad
\widehat W_{ij}^{h}
=
\frac{A_{ij}^{h}}{\widehat Z_i^h}.
\]
Therefore,
\[
\widehat W_{ij}^{h}
=
W_{ij}^{\star,h}
\frac{Z_i^h}{\widehat Z_i^h}.
\]
Because $W_{ij}^{\star,h}>0$, multiplying the preceding
bound by $W_{ij}^{\star,h}$ yields
\[
\frac{1}{\sqrt{2}}W_{ij}^{\star,h}
\le
\widehat W_{ij}^{h}
\le
\sqrt{2}W_{ij}^{\star,h},
\]
which proves the result.

The same identity also gives
\[
\sum_{j=1}^{N}\widehat W_{ij}^{h}
=
\frac{Z_i^h}{\widehat Z_i^h}
\in
\left[
\frac{1}{\sqrt{2}},
\sqrt{2}
\right].
\]
Moreover, every element in a row is multiplied by the same
positive factor. Hence, for any keys $j$ and $\ell$,
\[
\frac{\widehat W_{ij}^{h}}
     {\widehat W_{i\ell}^{h}}
=
\frac{A_{ij}^{h}}{A_{i\ell}^{h}}
=
\frac{W_{ij}^{\star,h}}
     {W_{i\ell}^{\star,h}},
\]
so power-of-two scaling preserves all pairwise ratios and the
ordering of the affinities.
\end{proof}

\section{Training and Quantization Details}
\label{app:training}

\paragraph{Model settings.}
Lapis-S follows the small vision Transformer configuration
and is used for CIFAR-10, CIFAR-100, and all controlled
ablations. Lapis-B and Lapis-L use the corresponding
BEiT-B/16 and BEiT-L/16 backbones on ImageNet-1K. The TTFS coding window is set
to $T=15$ for Lapis-S and Lapis-B, and to $T=20$ for
Lapis-L.

\paragraph{CIFAR training and distillation.}
Lapis-S is trained for 100 epochs with a batch size of 128.
We use AdamW with an initial learning rate of
$1\times10^{-4}$, a weight decay of $0.05$, and a cosine
schedule with five epochs of linear warmup.

The teacher is a BEiT-v2-Large model initialized from the
\texttt{timm} ImageNet checkpoint
\texttt{beitv2\_large\_patch16\_224}. Before student
training, the teacher is adapted separately to CIFAR-10 and
CIFAR-100 for 20 epochs with a batch size of 64 and a
learning rate of $5\times10^{-5}$.

The student objective combines classification, logit
distillation, and feature distillation:
\begin{equation}
\mathcal{L}
=
0.5\mathcal{L}_{\mathrm{CE}}
+
0.5\mathcal{L}_{\mathrm{logit}}
+
0.5\mathcal{L}_{\mathrm{feat}}.
\label{eq:cifar_distillation}
\end{equation}
The logit term uses soft KL divergence with distillation
temperature $T_{\mathrm{KD}}=4$. For feature distillation,
we first apply layer normalization and then minimize the
mean squared error between selected intermediate features.
Student blocks $2$, $5$, $8$, and $11$ are aligned with
teacher blocks $5$, $11$, $17$, and $23$, respectively.
A learned linear projection maps the student feature
dimension from $384$ to the teacher dimension of $1024$.
All relation-function ablations use the same teacher,
distillation losses, and training schedule.

\paragraph{ImageNet fine-tuning.}
The ImageNet models are initialized from pretrained BEiT
backbones and first optimized as quantized networks with
exact row normalization and full-precision body weights.
Lapis-L is fine-tuned for 60 epochs with two warmup epochs,
a global batch size of 162, and an initial learning rate of
$8.5\times10^{-5}$. Training uses distributed data parallel
execution on three GPUs. Lapis-B is fine-tuned for 50 epochs
with a global batch size of 256 and an initial learning rate
of $1.3\times10^{-4}$.

Starting from the exact-normalized checkpoint, we recover
the reported deployment configurations through short
quantization-aware fine-tuning. The power-of-two W32 model
with a spike-coded attention readout is fine-tuned for 10
epochs with a batch size of 108 and a learning rate of
$2\times10^{-5}$. The power-of-two W6 model is fine-tuned
for 10 epochs with the same batch size and learning rate.
This stage jointly enables power-of-two normalization,
spike-coded attention readout, and 6-bit body weights. No
additional warmup is used in either recovery stage. The W6
stage uses a light augmentation recipe with
\texttt{mix\_prob}=0.1.

\paragraph{Activation quantization and TTFS conversion.}
Activations are quantized with learned step-size
quantization into nonnegative integer codes
\[
z\in\{0,\ldots,T\}.
\]
The zero code denotes an inactive neuron. Each positive code
emits exactly one spike at
\[
n=T-z,
\qquad 0\le n<T.
\]
Larger activation codes therefore correspond to earlier
spikes. The encoding is applied elementwise throughout the
network. As established in the main paper, this affine
conversion preserves the query--key $\ell_1$ distance and
therefore leaves the Lapis affinities unchanged.

\paragraph{Weight quantization.}
Lapis-W32 retains full-precision body weights. Lapis-W6
uses 6-bit per-channel quantization for the weight-bearing
linear layers in each Transformer block. These layers include
the query, key, and value projections, the attention output
projection, and the two MLP projections. The patch
embedding and classification head remain in FP32.

In the deployment configurations, the attention output is
quantized into a spike representation before the output
projection. The power-of-two denominator is enabled
together with the matching attention-output quantizer and
weight precision during quantization-aware fine-tuning.
This ensures that both the reported accuracy and the measured
spike activity correspond to the same deployed checkpoint.


\end{document}